\pdfoutput=1
\documentclass[journal]{IEEEtran}
\usepackage{amsmath}
\usepackage{amssymb}
\usepackage{booktabs}
\usepackage{graphicx}
\usepackage{siunitx}
\usepackage{url}
\usepackage{newtxtext}

\newtheorem{proposition}{Proposition}

\begin{document}
\bstctlcite{IEEEexample:BSTcontrol}

\title{Bayesian Distribution System State Estimation with Flow Matching}

\author{Nan Lin,~\IEEEmembership{Graduate Student Member,~IEEE}, Peter~Palensky,~\IEEEmembership{Senior Member,~IEEE}, Pedro~P.~Vergara,~\IEEEmembership{Senior Member,~IEEE}
\thanks{Nan Lin is funded by NWO Align4Energy Project NWA.1389.20.251. This work used the Dutch national e-infrastructure with the support of the SURF Cooperative using grant no. EINF-17304.}%
\thanks{Nan Lin, Peter Palensky, and Pedro P. Vergara are with the Intelligent Electrical Power Grids Group, Delft University of Technology, 2628 CD Delft, The Netherlands (e-mail: N.Lin@tudelft.nl, P.Palensky@tudelft.nl, P.P.VergaraBarrios@tudelft.nl).}%
\thanks{The code is available on Github: https://github.com/sentient-codebot/PoSE and https://github.com/distributionnetworksTUDelft/PoSE}%
}

\maketitle

\begin{abstract}
Medium-voltage feeders have few real-time measurements, so distribution
system state estimation (DSSE) rests on pseudo-measurements of the secondary
substations, a prior in Bayesian terms. Weighted least squares keeps the power
flow exact but reduces this prior to a mean and variance, while supervised
deep learning estimators also learn the power flow and are tied to one
measurement set. We propose PoSE (posterior sampling for state estimation),
which keeps the Bayesian structure of DSSE and learns only what the
distribution system operator does not know, namely the distribution of substation injections, modelled by a neural network trained with flow
matching on historical data. The power flow equations, grid topology and
measurements enter only at sampling, through a noise-aware Kalman-type
correction. The neural network therefore never approximates the power flow,
but every posterior sample is a physically consistent operating point, and
one trained prior serves any measurement set, noise level and topology. On
the IEEE 33-bus feeder, PoSE
lowers the probabilistic injection error (CRPS) by 6.7--7.5\% against a
full-covariance Gaussian prior in all 12 tested
settings. It provides reliable uncertainty: the 90\% intervals contain the
true injection almost exactly as often as exact posterior samples would,
whereas those of supervised estimators contain it in only 37--78\% of cases. 
On the 240-bus unbalanced three-phase Iowa
feeder, it gives the lowest injection, voltage and branch-current errors of
all estimators. 
A fast variant matches this accuracy at 0.2--0.6~s
per snapshot and, without retraining, beats Gaussian priors and
retrained supervised estimators on seven reconfigured topologies.
\end{abstract}

\begin{IEEEkeywords}
Distribution system state estimation, flow matching, diffusion, generative models,
pseudo-measurements, neural networks.
\end{IEEEkeywords}

\section{Introduction}
\label{sec:introduction}

Distribution system operators (DSOs) must know the voltages and branch loading of
their medium-voltage feeders to keep them within limits, and rooftop PV pushes
these feeders towards those limits in both directions: voltage rise and
reverse flow at midday, voltage drop and peak loading in the evening. Yet a
typical feeder is measured in real time only at the primary substation and at
a few remote points. Distribution system state estimation (DSSE) is therefore underdetermined, and pseudo-measurements
of the secondary substations, estimates of their injections from historical
data, carry most of the information about the state
\cite{abur2004power,dehghanpour2019survey,primadianto2017review}.

Two requirements follow. The first is a realistic joint description of the
substation injections, which are correlated across substations through shared
weather and consumption patterns and non-Gaussian under rooftop PV. The second
is physical and measurement consistency. The estimator must incorporate
sparse, noisy measurements through the nonlinear power flow equations and
provide calibrated uncertainty.

Existing approaches meet only one requirement at a time. Weighted least squares
with Gaussian pseudo-measurements meets the second. It accepts any measurement
set, weights each measurement by its noise variance and solves the AC power
flow. It describes the injections, however, by independent means and
variances, and both simplifications distort the uncertainty it reports.
Rooftop PV output rises and falls together across substations under shared
irradiance, so an independent prior understates how far the substations can
deviate together, which is what drives midday voltage rise and reverse flow.
When the pseudo-measurements are skewed or heavy-tailed, the WLS
overstates the uncertainty, by a margin that varies across buses and
operating points~\cite{mamudi2026sensitivity}. The WLS covariance therefore does not give a
reliable limit violation probability. Gaussian mixture
pseudo-measurements add a non-Gaussian shape~
\cite{singh2010statistical,singh2010dsse}, but the estimator still returns one
point and one covariance, so a skewed or bimodal injection is reported as a
symmetric interval around a single estimate.

Learned estimators take the statistics of the injections and voltages from
data rather than from an assumed Gaussian shape, but give up parts of the
second requirement. Learned pseudo-measurements improve the point estimate~
\cite{delavarga2025learning,delavarga2026endtoend}. Bayesian deep estimators
approximate the minimum mean square error estimator~
\cite{mestav2019bayesian,mestav2019gan} or the voltage distribution~
\cite{zhang2024multifidelity}, and a conditional diffusion model trained on
measurement--state pairs gives voltage point estimates~
\cite{hua2026spatiotemporal}. Each is trained for one measurement
configuration, so a failed meter, a new sensor, a change in sensor accuracy
or a switching action requires retraining. The estimators that output
voltages see the measurement noise only through simulated training pairs and
must approximate the power flow equations themselves, so their voltages and
branch currents need not form a physically consistent operating point. They
return a point, or a mean and variance of the voltage magnitudes, which does
not give the probability that a branch exceeds its thermal limit. A
generative adversarial prior over unobservable injections, fitted to the
measurements at inference through LinDistFlow, avoids retraining~
\cite{kamal2022physics}. Without a noise model, however, it treats every
measurement as exact, so the measurement noise passes into the estimate, and
its single point estimate satisfies only the linearized power flow. A
normalizing flow has been used to learn the state distribution~
\cite{afrasiabi2024normalizing}. Table~\ref{tab:prior_work} summarizes these
approaches by the quantity each models, either as a regression $y\to\cdot$
from the measurements $y$ or as a distribution $p(\cdot)$, over the injections
$s$ or the voltage phasors $V$, possibly given calendar context $c$. It also
marks whether a new measurement set or noise level needs retraining, and
whether the measurement noise covariance enters the estimator explicitly, only
through simulated training pairs, or not at all.

The design principle we follow separates what the DSO knows from what
must be learned. The power flow equations, the feeder topology, and the sensor
locations and noise are known, and they define the likelihood of the
measurements. The joint distribution of the injections is unknown and is
learned from historical data as the prior. Bayes' rule combines the two into
the posterior, which is available in closed form only for a Gaussian prior and
linear measurements. Diffusion and flow priors have made this posterior
tractable for non-Gaussian priors and nonlinear measurements
\cite{chung2023dps,song2023pigdm,boys2024tmpd,pokle2024trainingfree}, and
SmartMeterFM applies measurement-guided flow matching to smart meter data
\cite{lin2026smartmeterfm}. We train the prior once, and the measurements
enter only at sampling, so the same prior serves any measurement set, noise
level and topology. DSSE adds a combination these methods have not addressed:
noisy measurements of a few aggregate quantities through the AC power flow,
and estimates that must be physically consistent and calibrated.

Our solution, PoSE (posterior sampling for state estimation), is a
conditional flow matching prior \cite{lipman2023flow} over the substation
injections, sampled with a noise-aware endpoint correction linearized through
the power flow model. The contributions are:
\begin{itemize}
\item To our knowledge, PoSE is the first DSSE method that samples the
posterior with a learned generative prior and an inference-time endpoint
correction toward the measurements. The prior over the secondary-substation injections is trained
once on historical data and contains no sensor or noise information. The
measurement set, its noise and the feeder model enter only through the
correction. The same trained prior therefore serves any meter configuration, noise
level and switching configuration of the feeder without retraining, and they
return the posterior distribution of the state rather than a single estimate.
\item PoSE returns estimates that satisfy the AC power flow by
construction. The prior is learned in the injection space, so the network
never has to approximate the power flow equations, as estimators that
output voltages must. Every posterior sample is instead mapped through the
AC power flow, so every sampled voltage and branch current is a physically
consistent operating point. Operational quantities, such as the probability
that a voltage or thermal limit is violated, are read directly from the
samples.
\item We design a noise-aware endpoint correction mechanism, with proven exactness in linear-Gaussian. It uses the full covariance of
the learned prior instead of an isotropic or diagonal approximation. Its fast variant, PoSE-fast, forms this covariance
only along the measured directions, so its cost grows with the number of
measurements, which is small in distribution grids, rather than with the
state dimension.
\end{itemize}

\begin{table*}[!t]
\centering
\caption{Comparison with prior DSSE approaches}
\label{tab:prior_work}
\footnotesize
\setlength{\tabcolsep}{4pt}
\begin{tabular}{@{}lllccccc@{}}
\toprule
Approach & Model family & Modelled quantity &
\begin{tabular}[c]{@{}c@{}}Deep\\learning\end{tabular} & Output &
\begin{tabular}[c]{@{}c@{}}No\\retraining\end{tabular} & Noise &
\begin{tabular}[c]{@{}c@{}}AC-\\consistent\end{tabular} \\
\midrule
WLS, Gaussian pseudo-meas.\ \cite{abur2004power} & Diagonal Gaussian &
Injections, $p(s)$ & $\times$ & Point, covariance & $\checkmark$ & Explicit &
$\checkmark$ \\
GMM pseudo-meas.\ \cite{singh2010dsse} & Gaussian mixture &
Injections, $p(s)$ & $\times$ & Point, covariance & $\checkmark$ & Explicit &
$\checkmark$ \\
Learned pseudo-meas.\ \cite{delavarga2025learning} & Linear regression &
Injections, $y\to s$ & $\times$ & Point, covariance & $\times$ & Explicit &
$\checkmark$ \\
End-to-end pseudo-meas.\ \cite{delavarga2026endtoend} & MLP &
Injections, $y\to s$ & $\checkmark$ & Point, covariance & $\times$ & Explicit &
$\checkmark$ \\
Bayesian deep estimator \cite{mestav2019bayesian} & MLP &
Voltages, $y\to V$ & $\checkmark$ & Point & $\times$ & Training only &
$\times$ \\
GAN-based estimator \cite{mestav2019gan} & GAN; MLP &
Inj., $p(s)$; volt., $y\to V$ & $\checkmark$ & Point & $\times$ &
Training only & $\times$ \\
Multi-fidelity deep GP \cite{zhang2024multifidelity} & Deep Gaussian process &
Voltage magn., $y\to\lvert V\rvert$ & $\checkmark$ & Mean, variance &
$\times$ & Training only & $\times$ \\
Conditional diffusion \cite{hua2026spatiotemporal} & Conditional diffusion &
Voltages, $p(V\mid y)$ & $\checkmark$ & Point & $\times$ & Training only &
$\times$ \\
Physics-conditioned GAN \cite{kamal2022physics} & Wasserstein GAN &
Unobs.\ injections, $p(s)$ & $\checkmark$ & Point & $\checkmark$ & None &
$\times$ \\
\midrule
PoSE (ours) & Flow matching & Injections, $p(s\mid c)$ & $\checkmark$ &
Posterior samples & $\checkmark$ & Explicit & $\checkmark$ \\
\bottomrule
\end{tabular}
\end{table*}

\section{DSSE as a Bayesian Inverse Problem}
\label{sec:problem}

\subsection{State, Measurements and Posterior}
We consider a radial medium-voltage distribution feeder, modeled as a
single-phase equivalent or as an unbalanced three-phase network, with a known
slack voltage at the primary substation and $N$ load nodes, the secondary
substations. The
unknown is the vector $s=[p^{\mathsf{T}}, q^{\mathsf{T}}]^{\mathsf{T}}\in\mathbb{R}^{2N}$ of net active power $p$, in
kW, and net reactive power $q$, in kvar, drawn at the load nodes in one
snapshot, positive for consumption and negative for net export; $s$
determines the operating point. Buses without load are part of the feeder
model, not of the state. An AC power flow maps $s$ to the node voltages and the
branch currents; we write
\begin{equation}
  y=h(s)+e,\qquad e\sim\mathcal{N}(0,R),
  \label{eq:measurement}
\end{equation}
where $h$ runs the power flow and returns the measured quantities: active and
reactive power at the feeder head, voltage magnitudes at selected buses and,
where available, the injections of monitored secondary substations. With few
real-time measurements, $m\ll 2N$, and $s$ is not identifiable from $y$ alone.

Given calendar context $c$ (month, time of day and day type), DSSE is the
inverse problem of recovering $s$ from $y$. Its Bayesian solution is the
posterior
\begin{equation}
  p(s\mid y,c)\;\propto\;p(y\mid s)\,p(s\mid c),
  \label{eq:posterior}
\end{equation}
whose likelihood follows from \eqref{eq:measurement} and whose prior
$p(s\mid c)$ describes how the injections of all substations behave jointly in
that context. Every quantity a DSO monitors is a
function of $s$: the posterior over $s$, pushed through the AC power flow,
gives the posterior over voltages and branch currents, and with it the
probability that a limit is violated.

\subsection{Correspondence with Bayesian Inverse Problems}
Table~\ref{tab:terms} maps the terms of DSSE onto those of Bayesian inverse
problems. Pseudo-measurements act directly as a prior distribution: the
classical weighted least-squares (WLS) estimator with Gaussian
pseudo-measurements \cite{abur2004power} is formally the maximum a
posteriori (MAP) estimate under a diagonal Gaussian prior. Similarly, feeder
unobservability represents ill-posedness, where real-time measurements
alone leave a subspace of injections undetermined, which the prior must
resolve. The estimation quality of an unobservable feeder is therefore
fundamentally governed by prior expressiveness, making a richer joint prior the
direct route to superior state estimates.

\begin{table}[!t]
\centering
\caption{DSSE terms and their Bayesian counterparts}
\label{tab:terms}
\begin{tabular}{@{}ll@{}}
\toprule
DSSE & Bayesian inverse problem \\
\midrule
Pseudo-measurements & Prior $p(s\mid c)$ \\
Real-time measurements and their accuracy & Likelihood $p(y\mid s)$ \\
Unobservable feeder & Ill-posed inverse problem \\
WLS with pseudo-measurements & MAP under a Gaussian prior \\
Estimate with uncertainty & Posterior $p(s\mid y,c)$ \\
\bottomrule
\end{tabular}
\end{table}

\section{PoSE Methodology}
\label{sec:method}

The method keeps the two factors of \eqref{eq:posterior} apart. A flow
matching model learns the prior $p(s\mid c)$ once from historical secondary
substation data; measurements enter only at inference, through a correction of
the sampler's endpoint estimate. The same trained weights therefore serve any
measurement set, noise level and radial topology. Fig.~\ref{fig:method} shows
which part is trained, when, and with which data.

\begin{figure}[!t]
\centering
\includegraphics[width=0.9\columnwidth]{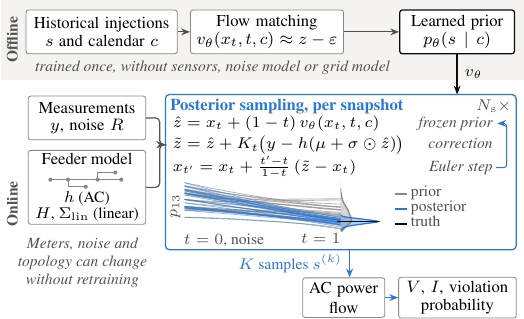}
\vspace{-0.30cm}
\caption{PoSE in two phases. Offline, a flow matching prior over the
substation injections is trained once on historical data, without sensors,
noise model or grid model. Online, the measurements and the feeder model enter
only through the endpoint correction \eqref{eq:correction} at each Euler step
\eqref{eq:euler}. The inset shows 25 real sampling paths of the bus-13
injection for one test snapshot, under the prior and under S2, drawn with an
earlier prior over active power alone.}
\label{fig:method}
\end{figure}

\subsection{Flow Matching Prior over Injections}
We standardize the injections per node, $z=(s-\mu)/\sigma$, with the mean
$\mu$ and standard deviation $\sigma$ of the training snapshots, and train a
conditional flow matching model \cite{lipman2023flow} on the linear path
$x_t=t\,z+(1-t)\,\varepsilon$, $\varepsilon\sim\mathcal{N}(0,I)$, with $t=1$
at the data. A neural network $v_\theta(x_t,t,c)$ regresses $z-\varepsilon$.
Its endpoint estimate
\begin{equation}
  \hat z(x_t,t)=x_t+(1-t)\,v_\theta(x_t,t,c)
  \label{eq:endpoint}
\end{equation}
approximates $\mathbb{E}[z\mid x_t,c]$. The neural network is a residual multilayer perceptron
with adaptive layer normalization on an embedding of $t$ and $c$. Month and
time of day enter through low-order Fourier features of the yearly and daily
cycles, and day type through a learned embedding. Smooth calendar features
let the neural network share information between neighbouring hours and months
instead of fitting each calendar cell separately, which matters because a year
of data offers only a few days per cell. The neural network has four residual blocks
of width 512 (256 for the Fluvius data) and a 128-dimensional embedding.
We train for 10,000 steps with AdamW (learning rate $10^{-3}$, 1,000 warm-up
steps followed by a cosine decay, batch size 1,024, weight decay 0.1),
dropout 0.1 and an exponential moving average of the weights with decay
0.999, and keep the average with the lowest flow matching loss on validation
weeks, evaluated every 2,000 steps. The configuration was selected on
validation data only.

\subsection{Posterior Sampling through Noise-Aware Endpoint Correction}
\label{sec:correction}
At every sampling step the endpoint estimate is corrected towards the
measurements by a Kalman-type update in standardized space,
\begin{align}
  \tilde z&=\hat z+K_t\bigl(y-h(\mu+\sigma\odot\hat z)\bigr),\label{eq:correction}\\
  K_t&=\Sigma_t H_z^{\mathsf T}\bigl(H_z\Sigma_t H_z^{\mathsf T}+R_{\mathrm{eff}}\bigr)^{-1},
  \qquad H_z=H\,\mathrm{diag}(\sigma),\nonumber
\end{align}
where the residual always uses the AC model $h$ and $H$ is the fixed matrix
of LinDistFlow, the linearized DistFlow model
\cite{baran1989sizing}. Its linearization error is folded into the
noise: $R_{\mathrm{eff}}=R+\Sigma_{\mathrm{lin}}$, with
$\Sigma_{\mathrm{lin}}$ the covariance of the difference between the AC and
the LinDistFlow measurements over the training snapshots.

\emph{Unbalanced three-phase networks:} For multi-phase feeders, the state
$s$ remains the $2N$ nodal injections at the secondary substations. The active
and reactive power of each bus is split over its load connections in
proportion to their nominal active power, and the share of a three-phase load
is split equally over its phases. Each connection lies on the phases, or phase
pairs for delta connections, of its distribution transformer winding. The
per-phase injections are thus a fixed linear function of $s$, and the
imbalance adds no unknowns. Voltages and branch currents are indexed over
node-phases and branch-phases. Node-phase $n$ has phase
$\varphi(n)\in\{\mathrm{a},\mathrm{b},\mathrm{c}\}$ with rotation
$\alpha_{\varphi(n)}\in\{1,a,a^2\}$, $a=e^{-j2\pi/3}$. Delta connections are
converted to equivalent wye powers at flat balanced voltages, which gives the
active and reactive injections $P^{\mathrm{eq}}_m$ and $Q^{\mathrm{eq}}_m$ at
node-phase $m$. The linear operator $H$ is the unbalanced LinDistFlow matrix
\cite{gan2014convex}, obtained by linearizing the AC power flow at no load
around the balanced slack voltage $V_s=V_0[1,a,a^2]^{\mathsf T}$, with $V_0$
the slack voltage magnitude. Let $Z^{\mathrm{path}}_{nm}$ be the sum, over the
branches shared by the paths from the slack to node-phases $n$ and $m$, of the
per-unit series impedance entry between phases $\varphi(n)$ and $\varphi(m)$,
and $c_{nm}=\overline{\alpha_{\varphi(n)}}\,\alpha_{\varphi(m)}Z^{\mathrm{path}}_{nm}$.
The first-order expansion of the voltage magnitude is
$|V_n|\approx V_0-\frac{1}{V_0}\sum_m\bigl(\mathrm{Re}(c_{nm})\,P^{\mathrm{eq}}_m+\mathrm{Im}(c_{nm})\,Q^{\mathrm{eq}}_m\bigr)$.
Composing this expansion and the per-phase head flows with the allocation
gives a fixed linear operator $H$ from $s$ to the measurements, so the
correction \eqref{eq:correction} operates under multi-phase physics without
retraining the prior. 

\emph{Endpoint covariance $\Sigma_t$:} The isotropic choice
$r_t^2 I$ with $r_t^2=(1-t)^2/(t^2+(1-t)^2)$ is the pseudoinverse-guided
correction of PiGDM \cite{song2023pigdm} as carried to flows by OT-ODE
\cite{pokle2024trainingfree}. The Tweedie choice uses the conditional
covariance of the endpoint \cite{efron2011tweedie,boys2024tmpd},
\begin{equation}
  \Sigma_t=\mathrm{Cov}(z\mid x_t)=\frac{(1-t)^2}{t}\,
  \frac{\partial\hat z}{\partial x_t},
  \label{eq:tweedie}
\end{equation}
which we symmetrize and project onto the positive semidefinite cone. The full
Jacobian costs one forward-mode product per state dimension and step. The
state dimension of the IEEE 33-bus feeder is 64, so the exact covariance is
affordable. The gain needs
$\Sigma_t$ only through $\Sigma_t H_z^{\mathsf T}$ and
$H_z\Sigma_t H_z^{\mathsf T}$. A fast variant, PoSE-fast, forms only these, as
the cost analysis of TMPD assumes \cite{boys2024tmpd}: $m$ forward- and $m$
reverse-mode products per step, which give the symmetrized product, instead of
$2N$. It projects
the $m\times m$ matrix $H_z\Sigma_t H_z^{\mathsf T}$, whitened by
$R_{\mathrm{eff}}$, onto the positive semidefinite cone; the same directions
are removed from $\Sigma_t H_z^{\mathsf T}$, so the update remains a Kalman
update with a positive semidefinite covariance and never increases the
whitened linear residual. The number of Jacobian products then grows with the
number of measurements rather than with the state dimension, as analyzed in
Section~\ref{sec:cost}.

\emph{Noise:} $R_{\mathrm{eff}}=0$ treats the measurements as exact;
otherwise the noise covariance and the linearization error enter as above.
With $\Sigma_t=r_t^2 I$, $R_{\mathrm{eff}}=0$ recovers the projection guidance
of SmartMeterFM \cite{lin2026smartmeterfm} and $R_{\mathrm{eff}}=R+\Sigma_{\mathrm{lin}}$
the noise-aware isotropic correction of PiGDM and OT-ODE; PoSE pairs the
Tweedie covariance with $R+\Sigma_{\mathrm{lin}}$, as in the TMPD update
\cite{boys2024tmpd}.

The corrected endpoint drives the sampler. We integrate the posterior flow
using the Euler step
\begin{equation}
  x_{t'}=x_t+(t'-t)\,\frac{\tilde z-x_t}{1-t}
  \label{eq:euler}
\end{equation}
with $N_{\mathrm{s}}=200$ steps along the quadratic time grid
$t_i=1-(1-i/N_{\mathrm{s}})^2$, $i=0,\dots,N_{\mathrm{s}}$, which places
more steps near $t=1$, where the narrow conditional posterior is resolved, and
return the corrected endpoint $\tilde z$ of the final step.

\subsection{Exactness for a Gaussian Prior}
\begin{proposition}[Exactness]
\label{prop:exactness}
If the prior on $z$ is Gaussian, $h(s)=c_0+Hs$ is linear (so that
$\Sigma_{\mathrm{lin}}=0$ and $R_{\mathrm{eff}}=R$), $e\sim\mathcal{N}(0,R)$ is
independent of $z$ and of the sampler noise, and
$\hat z=\mathbb{E}[z\mid x_t]$ exactly, then the PoSE correction gives
$\tilde z=\mathbb{E}[z\mid x_t,y]$, and the flow \eqref{eq:euler} in
continuous time transports $\mathcal{N}(0,I)$ to the posterior
$p(z\mid y)$.
\end{proposition}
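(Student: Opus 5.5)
The proof has two parts: an exact Gaussian conditioning identity for the correction, and an identification of the corrected velocity field with the marginal velocity of the flow-matching path under the posterior.

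\emph{Correction step.} With $h(s)=c_0+Hs$ and $s=\mu+\sigma\odot z$, the measurement becomes
\[
y=c_0+H\mu+H_z z+e,\qquad H_z=H\,\mathrm{diag}(\sigma).
\]
Hence the residual in \eqref{eq:correction} is $y-h(\mu+\sigma\odot\hat z)=y-c_0-H\mu-H_z\hat z$. Let the prior be $z\sim\mathcal N(m_0,P_0)$. Since $x_t=tz+(1-t)\varepsilon$ with independent $\varepsilon$, the triple $(z,x_t,y)$ is jointly Gaussian. Condition on $x_t$ first. Then $z\mid x_t\sim\mathcal N(\hat z,\Sigma_t)$, where $\hat z=\mathbb E[z\mid x_t]$ by hypothesis. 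I would check that $\Sigma_t$ in \eqref{eq:tweedie} is exactly this conditional covariance. For a Gaussian prior, $\hat z$ is affine in $x_t$, and a short computation gives
\[
\frac{\partial\hat z}{\partial x_t}=\frac{t}{(1-t)^2}\,\mathrm{Cov}(z\mid x_t).
\]
This covariance is already symmetric PSD, so the projection does nothing. Next, given $x_t$, the noise $e$ is independent of $(z,x_t)$, because it is independent of $z$ and of the sampler noise. So $(z,y)\mid x_t$ is Gaussian with:
\begin{itemize}
\item mean $(\hat z,\;c_0+H\mu+H_z\hat z)$;
\item $\mathrm{Cov}(y\mid x_t)=H_z\Sigma_tH_z^{\mathsf T}+R$;
\item cross-covariance $\Sigma_tH_z^{\mathsf T}$.
\end{itemize}
The Gaussian conditioning formula then gives $\mathbb E[z\mid x_t,y]=\hat z+K_t(\text{residual})$, with $K_t$ exactly as in \eqref{eq:correction} and $R_{\mathrm{eff}}=R$. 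This proves $\tilde z=\mathbb E[z\mid x_t,y]$.

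\emph{Transport step.} Fix $y$ and consider the linear interpolation path $x_t=tz+(1-t)\varepsilon$ with $z\sim p(z\mid y)$ and $\varepsilon\sim\mathcal N(0,I)$ independent. By independence of $e$, conditioning on $y$ does not change the law of $\varepsilon$. The standard flow matching result \cite{lipman2023flow} says the marginal velocity
\[
u_t(x)=\mathbb E[z-\varepsilon\mid x_t=x,y]
\]
generates the probability path $p_t(\cdot\mid y)$. That path runs from $p_0=\mathcal N(0,I)$ to $p_1=p(z\mid y)$ via the continuity equation. Using $\varepsilon=(x_t-tz)/(1-t)$, this velocity becomes
\[
u_t(x)=\frac{\mathbb E[z\mid x_t=x,y]-x}{1-t}=\frac{\tilde z-x}{1-t}.
\]
This is precisely the drift in \eqref{eq:euler}. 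The ODE $\dot x=u_t(x)$ therefore transports $\mathcal N(0,I)$ to the posterior.

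\emph{Main obstacle.} The delicate point is the regularity needed for the continuity-equation argument. It requires the velocity field to be Lipschitz so that the ODE has a unique flow, and this must be handled near $t=1$, where the factor $1/(1-t)$ blows up. In the Gaussian case $u_t$ is affine in $x$ with explicit coefficients. I would therefore verify directly that the affine ODE pushes Gaussians to Gaussians with the right mean and covariance for every $t<1$, and then pass to the limit $t\to1$. This bypasses the general theorem. If a singularity appears when $p(z\mid y)$ is degenerate, the argument can be stated for $t\in[0,1)$ with weak convergence at $t=1$. A secondary point to check is that $\Sigma_t$ uses the prior conditional covariance and not the $y$-conditioned one; this is correct, since Bayes conditions sequentially on $x_t$ and then on $y$.
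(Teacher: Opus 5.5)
Your proposal is correct and follows the same route as the paper's proof. You condition on $x_t$ to get $z\mid x_t\sim\mathcal{N}(\hat z,\Sigma_t)$, apply the Kalman update with the linear-Gaussian $y$ to identify $\tilde z=\mathbb{E}[z\mid x_t,y]$, and then recognize $(\tilde z-x_t)/(1-t)$ as the marginal velocity of the linear path under the posterior. Your additional checks fill in details the paper leaves implicit: the Gaussian verification of the Tweedie Jacobian identity, the observation that the PSD projection is inactive, that $\varepsilon$ stays standard normal and independent of $z$ given $y$, and the affine-ODE argument for regularity near $t=1$.
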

\begin{IEEEproof}
For a Gaussian prior, $p(z\mid x_t)$ is Gaussian with mean $\hat z$ and
covariance \eqref{eq:tweedie}, and $y$ is linear-Gaussian in $z$ and
independent of $x_t$ given $z$. The Kalman update of this Gaussian by $y$
is \eqref{eq:correction}, so $\tilde z=\mathbb{E}[z\mid x_t,y]$. On the
linear path the velocity of the posterior flow is
$(\mathbb{E}[z\mid x_t,y]-x_t)/(1-t)$, which is \eqref{eq:euler}.
\end{IEEEproof}

The update has the form of the denoising step of TMPD \cite{boys2024tmpd},
adapted here to a linear-path flow sampler. Because conventional WLS with
Gaussian pseudo-measurements corresponds to the MAP estimate under a diagonal
Gaussian prior, PoSE directly reduces to the full Bayesian posterior of WLS
when the prior is Gaussian. In practice, exact posterior recovery depends on
the fidelity of the learned prior, sampler time discretization and power flow
linearization. Because the true distribution of load and rooftop PV injections
is non-Gaussian and unknown in closed form, we verify theoretical exactness on
a linear-Gaussian testbed in Section~\ref{sec:exactness} and evaluate empirical
performance under realistic distribution conditions in
Sections~\ref{sec:case-study} and~\ref{sec:results}.

\subsection{Physically Consistent Samples}
Each posterior sample of $s$ is passed through the AC power flow, which gives
samples of all node voltages and branch currents that satisfy the power flow
equations exactly. Point estimates are posterior means; uncertainty and
violation probabilities follow from the samples without further assumptions.

\section{Case Study}
\label{sec:case-study}

\subsection{Feeder and Power Flow}
The main testbed is the IEEE 33-bus radial feeder \cite{baran1989reconfiguration}
as provided by pandapower (\texttt{case33bw}) \cite{thurner2018pandapower}:
12.66~kV, 32 load nodes and 32 branches, with the slack voltage fixed at
1.0~p.u. A batched backward/forward sweep in PyTorch supplies $h$ and its Jacobian; it
matches pandapower to within $10^{-10}$~p.u.\ in voltage magnitude on
100 test snapshots of each data set. Because
\texttt{case33bw} has no ampacities, the rating of a branch is its current at
nominal loading, and branch currents are reported in percent of that rating.

\emph{Reconfigured feeders:} Branches 1--32 of \texttt{case33bw} feed buses
2--33, and branches 33--37 are the normally open tie switches between buses
8--21, 9--15, 12--22, 18--33 and 25--29. These five ties give seven further
radial topologies, listed in
Table~\ref{tab:topologies}: one that opens branches 7, 9, 14, 32 and 37,
which lowers the losses at nominal loading from 202.68 to 139.55~kW; one with
all five ties closed; and, for each tie, the topology that closes it and
opens the adjacent sectionalizing branch. At nominal loading the largest change of a bus
voltage from the original topology, $\Delta V$, reaches 0.046~p.u. Closing the tie between buses 18 and 33
raises the losses and lowers the minimum voltage; it serves as a stress case.
The voltage meters of S2 stay at buses 18, 22 and 33, and the branch ratings
of each topology are its own currents at nominal loading. The flow matching prior, the
Gaussian and mixture priors and all tuned hyperparameters are those of the
original topology. Only $h$, $H$ and $\Sigma_{\mathrm{lin}}$ are rebuilt. The
supervised estimators are run both as trained on the original topology and
retrained for each topology.

\begin{table}[!t]
\centering
\setlength{\tabcolsep}{3pt}
\caption{Radial topologies of the IEEE 33-bus feeder at nominal loading}
\label{tab:topologies}
\begin{tabular}{@{}llrrr@{}}
\toprule
Topology & Open branches & Loss (kW) & $V_{\min}$ (p.u.) & $\Delta V$ (p.u.) \\
\midrule
Original & 33, 34, 35, 36, 37 & 202.68 & 0.9131 & -- \\
Loss-reducing & 7, 9, 14, 32, 37 & 139.55 & 0.9378 & 0.0412 \\
All ties closed & 7, 9, 14, 28, 32 & 139.98 & 0.9413 & 0.0458 \\
Tie 8--21 & 7, 34, 35, 36, 37 & 158.39 & 0.9299 & 0.0192 \\
Tie 9--15 & 14, 33, 35, 36, 37 & 196.42 & 0.9167 & 0.0134 \\
Tie 12--22 & 11, 33, 34, 36, 37 & 156.79 & 0.9267 & 0.0416 \\
Tie 18--33 (stress) & 32, 33, 34, 35, 37 & 203.95 & 0.9067 & 0.0098 \\
Tie 25--29 & 28, 33, 34, 35, 36 & 175.13 & 0.9285 & 0.0297 \\
\bottomrule
\end{tabular}
\end{table}

\emph{Iowa 240-bus system:} The second feeder is the Iowa distribution test
system \cite{bu2019timeseries}, a real 13.8~kV network with three feeders and
one year of hourly smart meter data from 2017. Net active and reactive powers
are aggregated at primary buses, yielding a 386-dimensional injection state
across 193 load nodes. We evaluate two representations.
First, a balanced positive-sequence equivalent preserving lateral voltage drops,
with two 50~kvar fixed capacitor injections, 1.025~p.u.\ slack voltage and
dead ends removed. On 100 test hours with active regulators, it matches the
unbalanced OpenDSS solution to within 0.004~p.u.\ in bus voltage magnitudes.
Its measurement set is the $S_2$ equivalent, with substation $P$, $Q$ and
three farthest bus voltages.
Second, a full unbalanced three-phase model with 455 node-phases and 432
kept branch-phases, excluding 23 branch-phases carrying no downstream loads or
capacitors, preserving OpenDSS line impedances and lateral phase assignments.
The reference is the reduced OpenDSS model without regulators, transformers or
line charging, which the three-phase power flow reproduces to within
$10^{-11}$~p.u.\ in voltages and $10^{-6}$~kW in per-phase head active power on
100 test hours. The same model generates synthetic measurements and serves the
estimator. Grid model mismatch is out of scope. Across test snapshots, voltage
unbalance remains mild, with unbalance factor at three-phase buses at most
0.13\% and maximum phase spread 0.006~p.u., whereas head current imbalance
reaches 30.6\% with a median of 11.9\%, and 20.8\% of three-phase branch-phase
currents exceed the balanced equivalent by over 10\%. The measurement set is
$S_2\text{-3ph}$ with $m=9$ measurements: per-phase head $P$ and $Q$, and the
voltage magnitudes at bus~1017 phase~c, bus~2048 phase~b and bus~3091 phase~a.

\subsection{Injection Data}
\emph{Ausgrid:} The main testbed uses the Ausgrid solar home electricity data
\cite{ratnam2017residential}: 300 homes with rooftop PV in the distribution network of Ausgrid in New South Wales, Australia,
half-hourly general consumption, controlled load and gross PV generation from
1 July 2010 to 30 June 2013. Each load node aggregates nine homes; the 54
homes with the largest PV systems feed buses 13--18, at the end of the long
branch, and the remaining homes are assigned at random. The net injection of
each node is scaled so that the 99th percentile of its gross consumption over
the training days matches the node's nominal \texttt{case33bw} load, and net
power is scaled by the same factor, so PV export keeps its ratio to
consumption. The reactive injection of a node is its scaled gross consumption
times the tangent of the node's nominal \texttt{case33bw} power factor, with PV
at unity power factor, so reactive and active power differ at midday. A high-PV
scenario multiplies gross generation by three, which
corresponds to about 5~kWp per home and produces reverse flow and voltage rise
at midday.

\emph{Fluvius:} The second data set is one year (2024) of 15-minute
smart meter data from 2,400 Flemish homes published by the DSO
Fluvius \cite{fluvius2024profiles}, aggregated to 75 homes per node
with the PV homes concentrated on buses 13--18. These meters record net
offtake and injection rather than gross consumption, so the reactive injection
is the nominal power factor applied to the net offtake. These data serve the
exactness test described in Section~\ref{sec:exactness} and the study of the training
data volume in Section~\ref{sec:data}.

\emph{Split and context:} Days 8--14 of every month are test days (252 days
for Ausgrid, 84 days each for Fluvius and Iowa), days 22--28 validation days,
and the remaining days training days; the split is strictly by time, never at
random. The Ausgrid data yield 592 training days, the Fluvius data 198, and the
Iowa data 197. From the pool of test days, 1,000 test snapshots are drawn by
stratified sampling round-robin across the 24 hours of the day, ensuring balanced
representation of diurnal operating points across all seasons. The context $c$
consists of month, time of day, and weekday or weekend.

\subsection{Measurement Sets and Noise}

\begin{figure}[!t]
\centering
\includegraphics[width=\columnwidth]{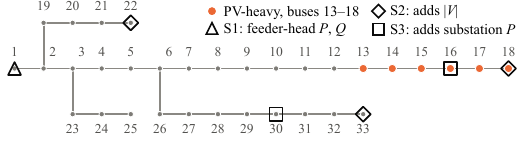}
\vspace{-0.72cm}
\caption{IEEE 33-bus feeder with the PV-heavy secondary substations (buses
13--18) and the measurement sets: S1 measures active and reactive power at the
feeder head, S2 adds voltage magnitudes at buses 18, 22 and 33, and S3 adds the
active power of the substations at buses 16 and 30.}
\label{fig:feeder}
\end{figure}
Three measurement sets cover the observability of a typical medium-voltage
feeder, illustrated in Fig.~\ref{fig:feeder}: S1 holds the active and reactive power at the feeder head; S2 adds the
voltage magnitudes at buses 18, 22 and 33, the ends of the three laterals;
S3 adds the active power of the secondary substations at buses 16 and 30. Each
measurement has Gaussian noise with a standard deviation of one third of a 1\%
maximum error of its nominal magnitude (nominal noise), or three times that
value (high noise). Measurements are generated from the AC power flow of the
true injections.

\subsection{Baselines}
\emph{Alternative estimators:} Six estimators from four families serve as
benchmarks. For classical point estimation, WLS uses diagonal Gaussian
pseudo-measurements with the mean and variance of each node in the training
snapshots of the same month and day type within one hour, solved by
Gauss--Newton on the AC model, with samples from the Laplace approximation at
the estimate. For Bayesian estimation without deep learning, the correlated
Gaussian replaces the diagonal covariance by the Ledoit--Wolf estimate of the
full covariance \cite{ledoit2004wellconditioned} to account for correlations
between substations; the pooled Gaussian retains the full-covariance Gaussian
but pools all training snapshots of the same day type with circular Gaussian
kernels in time of day and month, shrunk towards the diagonal to share
information across calendar contexts; and the Gaussian mixture uses a mixture
prior per context bin \cite{singh2010statistical} with the exact mixture
posterior under the bias-corrected LinDistFlow model. For supervised deep
learning, the supervised MLP maps $(y,c)$ to $s$, retrained for every
measurement set and noise level, with Monte Carlo dropout \cite{gal2016dropout}
for samples. For supervised generative modelling, the supervised flow is a
conditional flow matching model of $p(s\mid c,y)$ with the same architecture and
budget as the prior, retrained for every measurement set and noise level,
following the formulation of supervised conditional diffusion estimators
\cite{hua2026spatiotemporal}.

\emph{Guidance variants with the same prior:} The ablation keeps the learned
prior and changes only how the measurements enter: the projection and
isotropic variants of Section~\ref{sec:correction}, and diffusion posterior
sampling (DPS) \cite{chung2023dps}, the standard gradient guidance of
generative inverse solvers, which moves each sample along the gradient of the
measurement residual of its endpoint estimate.

\emph{Tuning:} Hyperparameters are chosen on the validation weeks: the number
of mixture components and the bandwidths and shrinkage of the pooled Gaussian
(time of day 0.5--4~h, month 0.5--2 months, shrinkage 0--0.2) once per data
set and PV scenario by validation log-likelihood, and the DPS step size from
$10^{-4}$ to 1 separately for every measurement set, noise level and scenario
by validation CRPS. The pooled Gaussian is solved as the correlated Gaussian.
Validation selects a single mixture component on every data set, so the
Gaussian mixture acts as a per-bin Gaussian prior with the exact posterior of
the bias-corrected LinDistFlow model.

WLS is the conventional reference in the main results table. The correlated
Gaussian tests the additional benefit beyond a Gaussian prior that captures
correlations.

\subsection{Metrics}
Every method returns 100 samples per snapshot on the 1,000 stratified test
snapshots, spread evenly across the 24 hours of the day so that every operating
point is equally represented. All methods see the
same truths, measurements and initial noise: each comes from its own random
generator, seeded per snapshot. Injection accuracy is measured
by the continuous ranked probability score (CRPS) of the active power per node
in kW \cite{gneiting2007strictly}; a point estimate scores its absolute error. The
CRPS of the reactive power, in kvar, is reported next to it. Voltage accuracy is the RMSE of the
posterior-mean voltage over the buses, in p.u., and branch accuracy the error
of the posterior-mean current in percent of the rating. Uncertainty calibration is
evaluated by the coverage of the central 90\% sample intervals of the active injections. With $K$
exact samples, such an interval covers a new draw with probability
$0.9(K-1)/(K+1)$, which is 0.882 for $K=100$; we report coverage next to this
attainable value. For operational limit and violation monitoring, we score the detection of limit
violations: voltages outside 0.95--1.05~p.u.\ and branch loading above
100\% of the rating. The predicted violation probability is the fraction of
AC-consistent samples that violate the corresponding limit; we report its Brier score, and the hit
and false-alarm rates of alarms raised at probabilities of 0.5 and 0.1.
Differences between methods are given with 95\% confidence intervals from a
paired block bootstrap over test days.

\section{Results}
\label{sec:results}

\subsection{Exactness at Feeder Scale}
\label{sec:exactness}
We test Proposition~\ref{prop:exactness} at the scale of the feeder. A Gaussian is fitted to the active injections of one midday
hour of the Fluvius data, its analytic endpoint replaces the network, and the exact
posterior for S2 at nominal noise is known in closed form. With 200 steps and
4,000 samples, PoSE matches the posterior: the largest
per-node error of the mean is 1.97 Monte Carlo standard errors, the per-node
standard deviation ratios lie in 0.971--1.017, and the relative Frobenius
error of the covariance is 0.040, below the 0.047 of 4,000 exact draws.

\subsection{Accuracy and Uncertainty Calibration}
\begin{table*}[!t]
\centering
\setlength{\tabcolsep}{4pt}
\caption{Injection accuracy, voltage and branch current errors and coverage, S2 at nominal noise. Bold: best per column.}
\label{tab:main}
\scalebox{0.75}{
\begin{tabular}{@{}l rrrrr rrrrr@{}}
\toprule
 & \multicolumn{5}{c}{Base PV} & \multicolumn{5}{c}{High PV} \\
\cmidrule(lr){2-6}\cmidrule(l){7-11}
Method & CRPS & \shortstack{$\Delta$CRPS vs WLS\\{[95\% CI]}} & V RMSE & Branch & Cov. & CRPS & \shortstack{$\Delta$CRPS vs WLS\\{[95\% CI]}} & V RMSE & Branch & Cov. \\
 & kW & kW & $10^{-3}$~p.u. & \% & & kW & kW & $10^{-3}$~p.u. & \% & \\
\midrule
WLS & 8.27 & Reference & 1.04 & 4.27 & 0.941 & 11.56 & Reference & 1.20 & 5.59 & 0.952 \\
Correlated Gaussian & 6.94 & $-$1.33 [$-$1.47, $-$1.20] & 0.87 & 3.53 & 0.906 & 7.87 & $-$3.68 [$-$4.11, $-$3.27] & 0.94 & 3.92 & 0.904 \\
Pooled Gaussian & 6.76 & $-$1.51 [$-$1.66, $-$1.38] & 0.86 & 3.45 & 0.902 & 7.68 & $-$3.88 [$-$4.30, $-$3.47] & \textbf{0.92} & 3.84 & 0.900 \\
Gaussian mixture & 6.91 & $-$1.36 [$-$1.49, $-$1.22] & 0.93 & 3.54 & \textbf{0.879} & 7.89 & $-$3.67 [$-$4.09, $-$3.27] & 1.14 & 3.98 & \textbf{0.881} \\
Supervised MLP & 7.78 & $-$0.49 [$-$0.66, $-$0.31] & 0.97 & 3.55 & 0.415 & 8.61 & $-$2.95 [$-$3.40, $-$2.53] & 1.10 & 4.04 & 0.490 \\
Supervised flow & 6.87 & $-$1.40 [$-$1.55, $-$1.23] & 0.88 & 3.45 & 0.765 & 7.84 & $-$3.72 [$-$4.15, $-$3.31] & 0.96 & 3.90 & 0.762 \\
PoSE (ours) & \textbf{6.42} & \textbf{\boldmath $-$1.85} [$-$2.00, $-$1.71] & \textbf{0.81} & \textbf{3.28} & 0.874 & \textbf{7.33} & \textbf{\boldmath $-$4.22} [$-$4.66, $-$3.80] & 0.95 & \textbf{3.73} & 0.876 \\
\bottomrule
\end{tabular}%
}
\end{table*}

Table~\ref{tab:main} compares the alternative estimators against WLS in the
setting with voltage measurements at nominal noise. Its $\Delta$CRPS is the
CRPS of a method minus that of WLS, computed before rounding, with the 95\%
paired block-bootstrap confidence interval; negative values favour the method.
For coverage, bold marks the value closest to the attainable 0.882. PoSE lowers the injection
CRPS by 1.85~kW on the base feeder and 4.22~kW on the high-PV feeder relative
to WLS, with both paired confidence intervals below zero. The correlated
Gaussian improves on WLS by accounting for correlations, and pooling calendar
contexts adds 0.18 and 0.20~kW; PoSE lowers the CRPS by a further 0.34 and
0.35~kW against the pooled Gaussian, with the lowest voltage and
branch-current errors on the base feeder and the lowest branch-current error
under high PV.

\begin{figure}[!t]
\centering
\includegraphics[width=\columnwidth]{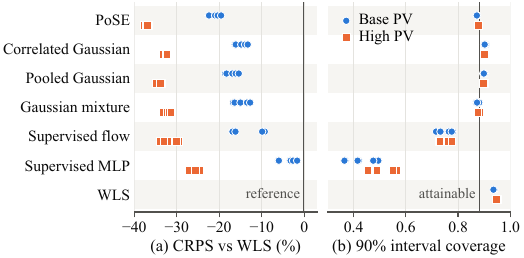}
\vspace{-0.72cm}
\caption{Injection accuracy and calibration of each estimator, one marker per
measurement set and noise level (six per PV scenario), 1,000 test snapshots
each. (a) Injection CRPS relative to WLS, the reference of
Table~\ref{tab:main}. (b) Coverage of the central 90\% intervals of the active
injections; the vertical line is the coverage that exact samples attain with
$K=100$, $0.9(K-1)/(K+1)=0.882$.}
\label{fig:crps}
\end{figure}

Across the whole matrix, shown in Fig.~\ref{fig:crps}(a), three measurement sets, two
noise levels and two PV scenarios, PoSE has a lower CRPS than every
alternative estimator in all 12 settings, with the 95\% confidence interval of
the difference below zero. Against the correlated Gaussian the gain is
7.2--7.5\% on the base feeder and 6.7--7.1\% under high PV, and
0.25--0.43~kW in the midday hours; against the pooled Gaussian it is
4.3--5.0\%. PoSE lowers the branch-current error relative to the correlated
Gaussian in every setting, by 4.3--6.9\%, and has the lowest reactive CRPS of
all alternative estimators in every setting.

Figure~\ref{fig:crps}(b) demonstrates that PoSE produces well-calibrated uncertainty intervals. Its 90\% intervals cover the
truth in 0.870--0.877 of cases across the 12 settings, against an attainable
0.882 for 100 samples, and 0.873--0.881 for the reactive injections. The
Gaussian priors are too wide (correlated 0.900--0.908, pooled 0.896--0.904).
The Gaussian mixture (0.869--0.883) is calibrated as well but less accurate.
The supervised estimators are overconfident: the supervised flow covers
0.716--0.780 and the supervised MLP 0.365--0.563. A prior learned once and
corrected at inference thus gives a posterior that is both accurate and
well calibrated in its uncertainty, while the estimators trained for each measurement set are
overconfident.

The learned posterior is not Gaussian. Across all test cases, the resulting
posteriors are unimodal yet distinctly asymmetric. The absolute skewness of
the PoSE samples at the PV-heavy buses at midday exceeds 0.5 in 24\% of
node-snapshots, against 4\% for the correlated Gaussian, whose samples are
Gaussian by construction so that its share reflects sampling noise alone. The
PoSE-fast posteriors in Fig.~\ref{fig:week}(d) are likewise asymmetric.

PoSE has a lower voltage RMSE than every alternative estimator in 10 of the
12 settings, including all six on the base feeder. Under high PV with voltage measurements
at nominal noise (S2 and S3), the pooled Gaussian reaches 0.92 and
0.87~$\times10^{-3}$~p.u.\ and the correlated Gaussian 0.94 and 0.88, against
0.95 and 0.89 for PoSE. In those two high-PV settings, the normalized chi-squared of the PoSE residual
is 2.0--4.2 across the three measurement sets, reflecting the increased
LinDistFlow linearization error under heavy reverse flow, compared to at most
0.9 under nominal conditions.

\subsection{Operational Limit and Violation Monitoring}
\begin{table*}[!t]
\centering
\setlength{\tabcolsep}{3.5pt}
\caption{Limit monitoring, S2 at nominal noise. Bold: best per column.}
\label{tab:dso}
\scalebox{0.75}{
\begin{tabular}{@{}l rrrrr rrrrr@{}}
\toprule
 & \multicolumn{5}{c}{Base PV} & \multicolumn{5}{c}{High PV} \\
\cmidrule(lr){2-6}\cmidrule(l){7-11}
Method & V hit & V FA & V Brier & I Brier & I err p95 & V hit & V FA & V Brier & I Brier & I err p95 \\
 & \% & \% & $10^{-3}$ & $10^{-3}$ & \% & \% & \% & $10^{-3}$ & $10^{-3}$ & \% \\
\midrule
WLS & 85.0 & \textbf{0.10} & 2.9 & 1.9 & 32.4 & 91.5 & 0.23 & 5.0 & 12.9 & 40.5 \\
Correlated Gaussian & 91.5 & 0.15 & 2.1 & 1.6 & 30.7 & 95.5 & 0.22 & 3.4 & 6.4 & 31.9 \\
Pooled Gaussian & 91.5 & 0.13 & 2.0 & 1.6 & 30.6 & 95.7 & 0.20 & \textbf{3.3} & 6.3 & 31.3 \\
Gaussian mixture & \textbf{96.9} & 0.31 & 2.7 & 1.6 & 30.9 & 91.7 & 0.21 & 5.1 & 7.0 & 31.8 \\
Supervised MLP & 84.5 & 0.16 & 3.3 & 1.7 & 32.4 & 93.0 & 0.25 & 4.9 & 7.1 & 32.2 \\
Supervised flow & 92.2 & 0.17 & 2.4 & 1.6 & 31.3 & \textbf{96.1} & 0.27 & 3.6 & 6.5 & 33.0 \\
PoSE (ours) & 91.8 & \textbf{0.10} & \textbf{1.9} & \textbf{1.5} & \textbf{28.2} & 94.6 & \textbf{0.16} & 3.5 & \textbf{6.2} & \textbf{29.3} \\
\bottomrule
\end{tabular}%
}
\end{table*}

Table~\ref{tab:dso} reports what a DSO monitoring limits would see.
Voltage alarms are raised at a predicted probability of at least 0.5; the hit
rate is the share of true violations that raise an alarm, and the false-alarm
rate the share of non-violating bus-snapshots that do. The last column of each
scenario is the 95th percentile over snapshots of the largest branch-current
error in a snapshot. The base feeder has 613 voltage violations and 62
overloads (bus or branch and snapshot), the high-PV feeder 1,891 and 2,119.
PoSE has the lowest 95th percentile of the largest branch-current error of a
snapshot and the lowest overload Brier score in both scenarios. Against the
correlated Gaussian, this tail error falls from 30.7--33.8\% to 28.2--31.3\%
of the rating across all 12 settings. For voltages, PoSE raises the fewest false alarms in both scenarios (0.10\% and
0.16\%, tied with WLS on the base feeder) while maintaining a hit rate within
1.0 percentage point of the correlated Gaussian. Although the Gaussian mixture
flags more violations on the base feeder, it does so at three times the
false-alarm rate (0.31\%), and Gaussian priors yield comparable voltage Brier
scores (3.3--3.4 against 3.5 under high PV) while suffering from significantly
worse branch-current tail errors.

Fig.~\ref{fig:week} shows the estimates of PoSE-fast and WLS over one test
week of the high-PV feeder. At midday, when PV export dominates, the 90\%
intervals of the bus-13 injection are about a third as wide for PoSE-fast as
for WLS, with a median width of 64 against 215~kW over the 63 midday
half-hours. At the marked snapshot, the 221~kW interval of the learned prior
falls to 75~kW with the feeder-head power alone and changes little with the
further meters of S2 and S3.
Every voltage and branch-current sample is an AC power flow solution, so the
violation probabilities follow from the same samples. Over this week the
Brier score of the overloads of branch 12 is 0.010 for PoSE-fast against
0.015 for WLS. At bus 17, next to the voltage meter at bus 18, both reach
0.021.

\begin{figure*}[!t]
\centering
\includegraphics[width=\textwidth]{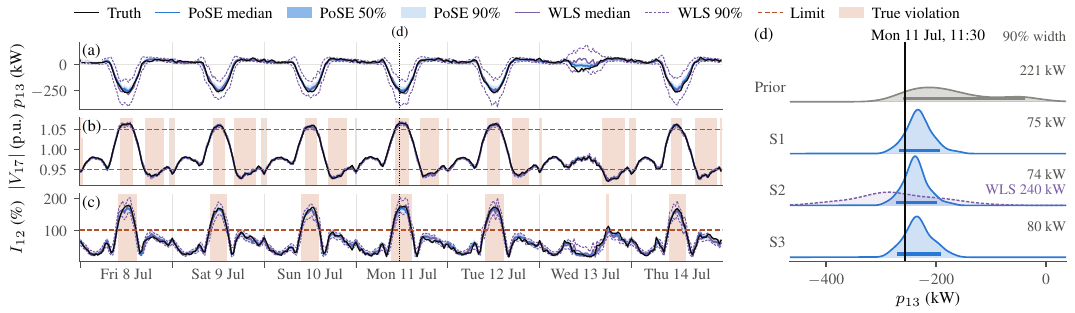}
\vspace{-0.7cm}
\caption{Estimates over the test week of 8--14 July 2011 on the high-PV
feeder, S2 at nominal noise, 100 samples. (a) Net active injection at bus 13.
(b) Voltage magnitude at bus 17, the unmeasured bus with the most true voltage
violations in the week. (c) Current of branch 12, which feeds bus 13, in
percent of its rating. Bands are the central 50\% and 90\% sample intervals
of PoSE-fast, dashed lines those of WLS, and shading marks true violations.
(d) Kernel density estimates of the bus-13 injection at the marked snapshot
under the learned prior and the PoSE-fast posteriors for S1, S2 and S3, with
the WLS posterior for S2 dashed. Bars and numbers give the 90\% intervals.}
\label{fig:week}
\end{figure*}

\subsection{Ablation of Measurement Guidance Mechanisms}
\label{sec:ablation}
\begin{table*}[!t]
\centering
\setlength{\tabcolsep}{4pt}
\caption{Injection CRPS (kW) and 90\% coverage (in parentheses) of the
guidance variants. Bold: lowest CRPS per column.}
\label{tab:ablation}
\scalebox{0.75}{
\begin{tabular}{@{}l rrr rrr@{}}
\toprule
 & \multicolumn{3}{c}{Base PV} & \multicolumn{3}{c}{High PV} \\
\cmidrule(lr){2-4}\cmidrule(l){5-7}
Guidance & S1 & S2 & S3 & S1 & S2 & S3 \\
\midrule
Correlated Gaussian (reference) & 7.07 (0.908) & 6.94 (0.906) & 6.65 (0.900) & 8.06 (0.906) & 7.87 (0.904) & 7.39 (0.901) \\
Projection & 6.82 (0.874) & 14.76 (0.689) & 15.11 (0.632) & 7.90 (0.878) & 17.65 (0.708) & 17.82 (0.657) \\
Isotropic & 6.80 (0.876) & 6.69 (0.874) & 6.36 (0.875) & 7.87 (0.881) & 7.70 (0.878) & 7.24 (0.879) \\
DPS & \textbf{6.54} (0.868) & 6.48 (0.866) & 6.33 (0.841) & \textbf{7.50} (0.871) & 7.42 (0.863) & 7.15 (0.843) \\
PoSE & \textbf{6.54} (0.872) & \textbf{6.42} (0.874) & \textbf{6.15} (0.872) & 7.52 (0.875) & \textbf{7.33} (0.876) & \textbf{6.87} (0.877) \\
\bottomrule
\end{tabular}%
}
\vspace{-0.3cm}
\end{table*}

Table~\ref{tab:ablation} keeps the learned prior and changes only the
guidance, at nominal noise. Accounting for the measurement noise is essential: projection works
when only the feeder-head power is measured, with a CRPS 2.0--4.0\% below the
correlated Gaussian across both noise levels, but with voltage measurements its CRPS is
2.1--6.6 times that of the correlated Gaussian, because exact projection onto
noisy voltages forces the injections to fit the noise. The noise-aware
isotropic correction recovers most of the accuracy, and the Tweedie covariance
supplies the rest: the isotropic correction loses 41--52\% of the gain of PoSE
over the correlated Gaussian on the base feeder and 57--73\% under high PV,
where the conditional covariance of the injections is least isotropic.

DPS, with a step size tuned per setting, comes closest to PoSE in point
accuracy but is overconfident, with coverage 0.836--0.871, and its gap to PoSE
grows as measurements are added: PoSE is 0.06--0.09~kW below DPS in S2 and
0.16--0.28~kW in S3, with 95\% confidence intervals below zero. With only the
feeder-head power measured (S1), the two achieve comparable CRPS, but DPS
requires sensitive manual step-size retuning for every measurement setting.
Furthermore, while DPS has a fast per-step evaluation (0.17~s), it lacks the
exact covariance guidance needed for uncertainty calibration, a computational gap that
PoSE-fast closes without loss of accuracy, as detailed in Section~\ref{sec:cost}.

\subsection{Sensitivity to Historical Training Data Volume}
\label{sec:data}

\begin{figure}[!t]
\centering
\includegraphics[width=\columnwidth]{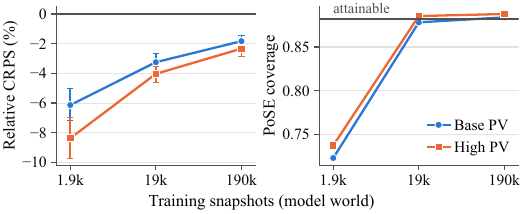}
\vspace{-0.72cm}
\caption{Model world (active-power state): a flow prior trained on all real
days of the Fluvius data serves as the true distribution. Left: CRPS of PoSE
relative to the correlated Gaussian with 95\% paired block-bootstrap
intervals; right: coverage of PoSE, both for S2 at nominal noise and 1,000
test snapshots, as the number of independent training snapshots grows.}
\label{fig:scaling}
\end{figure}
We used a model world to isolate the amount of data: a flow prior
trained on all real Fluvius days serves as the true distribution, and only the
number of independent training snapshots changes, as illustrated in Fig.~\ref{fig:scaling}.
Across $1{,}900$, $19{,}000$ and $190{,}000$ training snapshots, PoSE
consistently outperforms the correlated Gaussian: from $19{,}000$ to $190{,}000$ snapshots, the
CRPS of PoSE falls from 4.17 to 4.13~kW, while that of the correlated Gaussian
decreases from 4.31 to 4.20~kW. Crucially, the coverage of PoSE reaches the attainable
value from $19{,}000$ snapshots on.

\begin{table}[!t]
\centering
\setlength{\tabcolsep}{4pt}
\caption{Injection CRPS (kW) on reconfigured feeders. Bold: lowest on every
topology.}
\label{tab:reconf}
\def\twoline#1#2{\begin{tabular}{@{}l@{}}#1\\#2\end{tabular}}
\scalebox{0.72}{
\begin{tabular}{@{}l rr rr rr rr@{}}
\toprule
 & \multicolumn{4}{c}{Base PV} & \multicolumn{4}{c}{High PV} \\
\cmidrule(lr){2-5}\cmidrule(l){6-9}
 & \multicolumn{2}{c}{S1} & \multicolumn{2}{c}{S2} & \multicolumn{2}{c}{S1} & \multicolumn{2}{c}{S2} \\
\cmidrule(lr){2-3}\cmidrule(lr){4-5}\cmidrule(lr){6-7}\cmidrule(l){8-9}
Method & Orig. & Reconf. & Orig. & Reconf. & Orig. & Reconf. & Orig. & Reconf. \\
\midrule
WLS & 8.45 & 8.45--8.46 & 8.27 & 8.23--8.31 & 12.09 & 12.05--12.09 & 11.56 & 11.48--11.66 \\
Correlated Gaussian & 7.07 & 7.06--7.07 & 6.94 & 6.92--6.99 & 8.06 & 8.06--8.06 & 7.87 & 7.86--7.96 \\
Pooled Gaussian & 6.88 & 6.88--6.88 & 6.76 & 6.73--6.80 & 7.87 & 7.87--7.87 & 7.68 & 7.66--7.77 \\
Gaussian Mixture & 7.04 & 7.03--7.04 & 6.91 & 6.91--6.97 & 8.08 & 8.06--8.08 & 7.89 & 7.88--7.97 \\
\twoline{Supervised MLP,}{retrained} & 7.93 & 7.93--7.93 & 7.78 & 7.78--7.86 & 8.81 & 8.80--8.84 & 8.61 & 8.62--8.73 \\
\twoline{Supervised MLP,}{original topology} & 7.93 & 7.92--7.93 & 7.78 & 7.82--8.38 & 8.81 & 8.79--8.81 & 8.61 & 8.80--9.93 \\
\twoline{Supervised flow,}{retrained} & 7.01 & 7.01--7.02 & 6.87 & 6.89--6.95 & 7.99 & 7.99--8.01 & 7.84 & 7.86--7.96 \\
\twoline{Supervised flow,}{original topology} & 7.01 & 7.01--7.01 & 6.87 & 6.95--7.63 & 7.99 & 7.99--8.01 & 7.84 & 8.03--8.96 \\
\twoline{PoSE-fast}{(ours)} & \textbf{6.54} & \textbf{6.54--6.54} & \textbf{6.42} & \textbf{6.40--6.46} & \textbf{7.52} & \textbf{7.51--7.52} & \textbf{7.33} & \textbf{7.31--7.41} \\
\bottomrule
\end{tabular}
}
\end{table}

\subsection{Scalability on the Unbalanced Three-Phase Iowa Feeder}
\begin{table}[!t]
\centering
\setlength{\tabcolsep}{2.8pt}
\caption{Performance on the unbalanced three-phase 240-bus Iowa feeder ($S_2\text{-3ph}$, nominal noise, 1,000 test snapshots). Bold: best per column; for coverage, closest to the attainable 0.882.}
\label{tab:iowa}
\begin{tabular}{@{}l rrrrrr@{}}
\toprule
Method & CRPS $P$ & CRPS $Q$ & $V$ RMSE & Branch & Cov. & Time \\
 & kW & kvar & $10^{-3}$~p.u. & \% & 90\% & s \\
\midrule
WLS & 1.30 & 0.69 & 0.21 & 5.70 & \textbf{0.881} & \textbf{0.11} \\
Correlated Gaussian & 1.22 & 0.74 & 0.16 & 5.41 & 0.884 & 0.20 \\
Pooled Gaussian & 1.25 & \textbf{0.66} & 0.18 & 5.43 & 0.934 & 0.12 \\
Gaussian mixture & 1.36 & 0.77 & 0.18 & 5.65 & 0.735 & 0.43 \\
PoSE-fast (ours) & \textbf{1.13} & \textbf{0.66} & \textbf{0.15} & \textbf{4.79} & 0.810 & 0.58 \\
\bottomrule
\end{tabular}
\end{table}
On the unbalanced three-phase Iowa feeder, with 455 node-phases, 432
branch-phases and a 386-dimensional injection state, the learned prior applies
directly without retraining: multi-phase physics enters solely through the
unbalanced LinDistFlow operator and AC power flow. As summarized in
Table~\ref{tab:iowa}, PoSE-fast achieves the lowest active injection CRPS at
1.13~kW, reducing by 7.4\% relative to the correlated Gaussian at 1.22~kW,
and outperforming the pooled Gaussian at 1.25~kW, WLS at 1.30~kW, and the
Gaussian mixture at 1.36~kW, with all paired confidence intervals strictly
below zero.
PoSE-fast also attains the lowest overall voltage RMSE of $0.15\times 10^{-3}$~p.u.\
and branch-current error of $4.79$\% of rating against $5.41\text{--}5.70$\%
for the baselines, with 95\% bootstrap intervals below zero across all snapshots;
voltage RMSE ties the correlated Gaussian at midday. On reactive CRPS it ties
the pooled Gaussian at $0.664$~kvar, where the difference is within bootstrap
error. Compared to the single-phase balanced equivalent, where PoSE-fast obtains
1.13~kW CRPS and 0.816 coverage, three-phase results are virtually identical,
differing by only $-0.003$~kW, which confirms that multi-phase physics is handled
entirely at inference time. In contrast, the Gaussian mixture degrades by 11.4\% in CRPS,
from 1.22 to 1.36~kW. As with the balanced feeder, 197 training days leave the
learned prior's coverage at 0.810 against an attainable 0.882. Voltages remain
within 1.011--1.025~p.u., yielding zero voltage violations. For the 1,648
branch-phase overloads, the pooled Gaussian achieves the lowest false-alarm
rate at $3.8\times 10^{-4}$ against $9.4\times 10^{-4}$ for PoSE-fast, while
correlated Gaussian and mixture exhibit higher hit rates at $p \ge 0.5$, reaching
18.8\% and 21.4\% against 16.9\%. 
While exact full-covariance PoSE takes 76~s per
snapshot on the 386-dimensional state on one H100 on the balanced feeder,
PoSE-fast reduces this runtime to 0.58~s in the three-phase network, confirming
that exact measurement guidance scales to multi-phase grids in real time.

\subsection{Reconfigured Feeders without Retraining}
\label{sec:reconf}

Reconfiguration changes the feeder model but not the substation injections,
so the learned prior carries over unchanged. Rebuilding $h$, $H$ and
$\Sigma_{\mathrm{lin}}$ for a new topology takes 0.4--1.8~s, against 52--59~s
to retrain a supervised estimator on one H100. Table~\ref{tab:reconf}
reports PoSE-fast on the seven reconfigured topologies, both PV scenarios and
S1 and S2 at nominal noise, 28 settings. For each estimator it gives the
CRPS on the original topology (Orig.) and its range over the seven
reconfigured ones (Reconf.); on the
original topology, the supervised estimators trained there are the retrained
ones, so the two rows coincide in the Orig.\ columns. PoSE-fast has a lower CRPS than WLS,
the three Gaussian priors and both supervised estimators, retrained or
trained on the original topology, in all 28 settings, with every 95\%
interval below zero. Against the correlated Gaussian the gain is
6.8--7.6\%, against the pooled Gaussian 4.5--5.0\%, as on the original
topology. Its coverage stays at 0.871--0.878, against an attainable 0.882,
and it samples in 0.21--0.23~s per snapshot.

Supervised estimators trained on the original topology carry over when only
the feeder-head power is measured: in S1 their difference to the retrained
model includes zero in 11 (MLP) and 14 (flow) of 14 settings, and the MLP
trained on the original topology is 0.03--0.05~kW better than the retrained
one in three high-PV settings. With voltage measurements they lose accuracy:
in S2 they are worse than retrained in 13 (MLP) and 14 (flow) of 14
settings, by up to 1.22 and 1.00~kW under high PV on the loss-reducing
topology and with all ties closed.

Across all 28 reconfigured settings, PoSE-fast consistently maintains the lowest
reactive CRPS and branch-current error, achieving the lowest 95th
percentile of the largest branch-current error in 27 settings and the lowest
voltage RMSE in 24 settings. It also attains the lowest overload Brier score in
26 settings, matching or exceeding the reliability of Gaussian and mixture
estimators while operating on unseen topologies without retraining.

\subsection{Computational Efficiency and Runtime Analysis}
\label{sec:cost}
On the IEEE 33-bus feeder on one NVIDIA H100, PoSE takes 15.3~s per snapshot
with 200 steps and 100 samples, against 0.03~s for the correlated and pooled
Gaussians. The
difference to the isotropic correction reflects the Tweedie covariance of the
64-dimensional state. Fifty steps give essentially the same CRPS (6.426 vs.\
6.417~kW on base, 7.334 vs.\ 7.332~kW under high PV, S2 nominal) at 3.7--3.9~s
per snapshot, with coverage 0.865 and 0.869. 
PoSE-fast reduces runtime to 0.23~s per snapshot on the base feeder and 0.53~s on
the balanced Iowa network, reaching 0.58~s in the three-phase setting, 67 and about 140
times faster than PoSE while matching its accuracy and calibration. Across all
1,000 test snapshots of five settings (base feeder S2; high PV S1, S2 and S3;
Iowa S2; nominal noise), its injection CRPS differs from PoSE by $-0.009$ to
$0.000$~kW, with every 95\% paired interval within $\pm0.015$~kW; coverage
differs by at most 0.0013 and branch-current error by at most 0.005\% of rating.
Voltage RMSE differs by $-1.1$\% on Iowa to $+1.4$\% under high PV S3.

\section{Conclusion}
\label{sec:conclusion}

PoSE formulates DSSE as posterior sampling and separates what the DSO knows
from what must be learned. A flow matching prior learns the joint distribution
of substation injections from historical data, while power flow equations,
topology and sensors enter only at sampling through a noise-aware endpoint
correction. The
network never approximates the power flow, and every posterior sample, mapped
through the AC power flow, is a physically consistent operating point.

The learned prior provides the realistic representation of injections that
DSSE requires. On the IEEE 33-bus feeder with three years of recorded load and
PV data, PoSE lowers injection CRPS against Gaussian, mixture and supervised
estimators across all evaluated measurement sets, noise levels and PV
penetrations, and on the 240-bus unbalanced three-phase Iowa feeder it achieves
the lowest injection, voltage and branch-current errors. The noise-aware correction
brings in measurements consistently, providing calibrated uncertainty on the
33-bus feeder without retraining. PoSE-fast matches this accuracy at 0.2--0.6~s
per snapshot. Because topology enters only at sampling, the unchanged prior
carries over to seven reconfigured topologies, maintaining its lead over
Gaussian priors and retrained supervised estimators.

Future work follows the same separation. On the prior side, the calibration
of the uncertainty depends on the amount of training data, and composing feeder
priors from per-household models serves DSOs with short histories, and time
series or weather forecasts enter by modifying only the prior network.

\bibliographystyle{IEEEtran}
\bibliography{references}

\end{document}